\newtcbox{\mymath}[1][]{%
    nobeforeafter, math upper, tcbox raise base,
    enhanced, colframe=blue!30!black,
    colback=gray!30, boxrule=1pt,
    #1}
\newtheorem{theorem}{Theorem}
\newcommand\numberthis{\addtocounter{equation}{1}\tag{\theequation}}
\definecolor{rulecolor}{RGB}{0,71,171}
\definecolor{tableheadcolor}{RGB}{204,229,255}
\title{A mixture model for aggregation of multiple pre-trained weak
  classifiers}
\author{Rudrasis Chakraborty\textsuperscript{*}, Chun-Hao Yang\textsuperscript{*} and Baba C. Vemuri\\
Department of CISE, University of Florida, FL 32611, USA\\
\textsuperscript{* Equal Contribution} \\
{\tt\small \{rudrasischa, baba.vemuri\}@gmail.com ~~~~ \{chunhaoyang\}@ufl.edu}
}
\date{}
\begin{document}

\maketitle
\begin{abstract}
Deep networks have gained immense popularity in Computer Vision and
other fields in the past few years due to their remarkable performance
on recognition/classification tasks surpassing the state-of-the
art. One of the keys to their success lies in the richness of the
automatically learned features.  In order to get very good accuracy,
one popular option is to increase the depth of the network. Training
such a deep network is however infeasible or impractical with moderate
computational resources and budget.  The other alternative to increase
the performance is to learn multiple weak classifiers and boost their
performance using a boosting algorithm or a variant thereof. But, one
of the problems with boosting algorithms is that they require a
re-training of the networks based on the misclassified
samples. Motivated by these problems, in this work we propose an
aggregation technique which combines the output of multiple weak
classifiers. We formulate the aggregation problem using a mixture
model fitted to the trained classifier outputs. Our model does not
require any re-training of the ``weak'' networks and is
computationally very fast (takes $<30$ seconds to run in our
experiments). Thus, using a less expensive training stage and without
doing any re-training of networks, we experimentally demonstrate that
it is possible to boost the performance by $12\%$. Furthermore, we
present experiments using hand-crafted features and improved the
classification performance using the proposed aggregation
technique. One of the major advantages of our framework is that our
framework allows one to combine features that are very likely to be of
distinct dimensions since they are extracted using different
networks/algorithms. Our experimental results demonstrate a
significant performance gain from the use of our aggregation technique
at a very small computational cost.
\end{abstract}

\section{Introduction}\label{intro}

Deep convolution neural networks (CNNs) have recently gained immense
attention in computer vision and machine learning communities mainly
because it's superior performance in various applications including
image classification \cite{he2016deep,krizhevsky2012imagenet,lecun1995convolutional}, object detection \cite{girshick2014rich,iandola2014densenet,ren2015faster}, face
detection/recognition \cite{sun2014deep,lecun2015deep,parkhi2015deep,sun2014deepb} and many others. These networks usually
consist of a stack of convolution layers and fully connected layers
with pooling and non-linearity in between. By stacking multiple
layers, deep network can essentially extract complex features which
are more discriminative than features extracted by traditional machine
learning algorithms \cite{sermanet2013overfeat,simonyan2014very,szegedy2015going,zeiler2014visualizing}. Krizhevsky et al. \cite{krizhevsky2012imagenet} proposed a deep
CNN architecture (dubbed AlexNet) which performed exceptionally
well on ImageNet image classification dataset. The tremendous success
of AlexNet lead to a flurry of research activity in the community
resulting in a variety of deep CNN architectures for face recognition,
action recognition etc. etc.

As there are no specific guidelines regarding the choice of the depth
and width of the network, a significant amount of research has focused
on finding heuristics to determine these parameters to obtain the
``optimal'' network for the target application. This resulted in very
deep networks like DenseNet201 (of depth 201)
\cite{iandola2014densenet}, ResNet50 (of depth 168) \cite{he2016deep},
InceptionResnetv2 (of depth 572) \cite{szegedy2017inception}, Xception
(of depth 126) \cite{chollet2016xception} and others. Though these
very deep networks perform well on large datasets like ImageNet
\cite{deng2009imagenet}, JFT dataset \cite{hinton2015distilling} and
others, retraining these networks for small datasets or different
target applications is difficult due to their enormous size (in terms
of number of parameters). This raises the question, is it possible to
combine multiple ``weak'' networks (of smaller depth and hence lower
accuracy) and boost the performance significantly over each individual
network in the combination?

In response to the above question, recently, several researchers
proposed algorithms that construct a combination of different networks
to achieve improved performance. The basic idea of these methods have
been borrowed from traditional ML algorithms like bagging
\cite{dietterich2000ensemble} and boosting
\cite{schapire2003boosting}. Some of these methods rely on a weighted
combination of different networks
\cite{schwenk1997adaboosting,schwenk2000boosting,saberian2011multiclass}. While
boosting methods like the Diabolo classifier \cite{schwenk1998diabolo}
and the multi-column deep network
\cite{agostinelli2013adaptive,ciregan2012multi} focus on retraining
networks based on the previously misclassified samples. In
multi-column CNN, the authors train multiple CNNs simultaneously so
that a linear combination of these CNNs boost the performance and
serve as the final predictor. Recently, the authors in
\cite{moghimi2016boosted} proposed a boosting technique named BoostCNN
where similar to Adaboost
\cite{schwenk1997adaboosting,schwenk2000boosting}, they learn CNNs
sequentially on the mistakes from the earlier networks in the
sequence. Essentially, they built a deep CNN where the final network
output is aligned with the boosting weights. Though this sequential
approach is less expensive than multi-column deep network, this still
needs training of the CNNs which is time consuming. Very recently,
several significantly deep networks have been proposed in literature
\cite{szegedy2017inception,he2016deep}. Though these networks perform
very well, training takes a significant amount of time and hence
retraining is not computationally feasible. Even using transfer
learning \cite{pan2010survey}, sometimes it is not computationally
viable to train/fine tune these networks.
 
In this work, we propose a novel framework which takes multiple
pre-trained ``weak'' CNNs as input and outputs a probabilistic model
which is an aggregation of the pre-trained CNNs. We formulate the
problem of combining weak CNNs as a mixture model of the distributions
learned from the output of the deep networks. Our formulation can also
deal with features of different dimensions and provide a boosted
performance. Hence, we have two sets of experiments one to show the
performance boost on multiple weak deep networks and the other
experiment to show performance boost on multiple popular hand crafted
features. In practice, our method takes $<30$ seconds of additional
time to achieve the boosted performance. One of the key advantages of
our proposed framework is unlike previous boosting techniques, {\it it
  does not require any re-training of CNNs}. We show that our model
requires a simple optimization on a hypersphere which is solved using
a Riemannian gradient descent based approach. We have incorporated
both the parametric and non-parametric models for representing the
combination of networks and have shown that both these models achieve
boosted performance of the aggregation technique when compared to each
of the weak network classifiers. Through experiments, we show that on
CIFAR-10 data \cite{krizhevsky2014cifar}, using $20$ weak classifiers
of depth $<20$, our parametric model improved the accuracy by about
$8\% \sim 12\%$. On MNIST data \cite{lecun1998mnist,deng2012mnist},
using $20$ weak classifiers of depth $2$, our model achieves $2\% \sim
3\%$ improvement in classification accuracy.

Rest of the paper is organized as follows. In section \ref{theory}, we
present the framework for combination of ``weak'' networks. Section
\ref{results} contains various experiments conducted to depict the
performance of the proposed technique for improved performance. In
section \ref{conc} we draw conclusions.

\section{An aggregation of multiple weak networks}
\label{theory}

In this section, we propose both parametric and non-parametric models
to combine multiple ``weak'' networks in order to boost the overall
performance. In any deep network used for classification, the output
is a probability vector corresponding to the probability of the given
test data belonging to set of classes under consideration. In this
paper, we propose to exploit the geometry of the space of probability
densities. However, this space is a statistical manifold and the
natural metric on it is the well known Fisher-Rao metric
\cite{amari2016information}, which is difficult to compute. Hence, a
square root parameterization of the density is used to map the density
on to a unit Hilbert sphere whose geometry is fully known. Further, the
natural metric on the sphere can be used in all computations as it is
in closed form and is computationally efficient. We now present the
relevant basic concepts of differential geometry as applied to the
sphere that are needed in this work.

\subsection{Review of Basic Riemannian Geometry of $\mathbf{S}^N$}

The N-dimensional sphere, $\mathbf{S}^N$, is a Riemannian manifold
with constant positive curvature and is the simplest and widely
encountered manifold in many application domains.  In following
paragraph, we will present a very brief review of the relevant
differential geometry concepts of $\mathbf{S}^N$.
\vspace{0.2cm} 

{\bf Geodesic distance:} We will use the arc length distance as the
geodesic distance on $\mathbf{S}^N$. The arc length distance,
$d_{\text{arc}}: \mathbf{S}^N \times \mathbf{S}^N \rightarrow
\mathbf{R}$ is defined as follows:
\begin{align*}
d_{\text{arc}}(\mathbf{x},\mathbf{y}) = \cos^{-1}\left( \mathbf{x}^t\mathbf{y}\right),
\end{align*}
where $\mathbf{x}, \mathbf{y} \in \mathbf{S}^N$.
\vspace{0.2cm}

{\bf Exponential map:} Let, $\mathbf{x} \in \mathbf{S}^N$. Let
$\mathcal{B}_r\left(\mathbf{0}\right) \subset
T_{\mathbf{x}}\mathbf{S}^N$ be an open ball centered at the origin in
the tangent space at $\mathbf{x}$, where $r$ is the {\it injectivity
  radius} of $\mathbf{S}^N$ \cite{Berger}. Then, we can define the
\textbf{Exponential map}, $\mathsf{Exp}_{\mathbf{x}} :
T_{\mathbf{x}}\mathbf{S}^N \rightarrow \mathbf{S}^N$ as:
\begin{align*}
\mathsf{Exp}_{\mathbf{x}}\left(\mathbf{v}\right) = \cos\left(\|\mathbf{v}\|\right)\mathbf{x}  + \sin\left(\|\mathbf{v} \|\right)\frac{\mathbf{v}}{\|\mathbf{v}\|}, 
\end{align*}
where, $\mathbf{v} \in T_{\mathbf{x}}\mathbf{S}^N$. The Exponential
map maps a tangent vector $\mathbf{v}$ to a point on the great circle
along the direction $\mathbf{v}$ and with distance $\|\mathbf{v}\|$
from $\mathbf{x}$. Note that on $\mathbf{S}^N$, $r=\pi/2$.
\vspace{0.2cm}

{\bf Inverse Exponential map:} Inside
$\mathcal{B}_r\left(\mathbf{0}\right)$, $\mathsf{Exp}_{\mathbf{x}}$ is
a diffeomorphism, hence, the inverse exists and we can define the
inverse of the Exponential map by $\mathsf{Exp}^{-1}_{\mathbf{x}} :
\mathcal{U} \rightarrow \mathcal{B}_r\left(\mathbf{0}\right)$ and is
given by
\begin{align*}
\mathsf{Exp}^{-1}_{\mathbf{x}}\left(\mathbf{y}\right) = \frac{\theta}{\sin \theta} \left( \mathbf{y} - \mathbf{x} \cos \theta \right),
\end{align*}
where $\mathcal{U} =
\mathsf{Exp}_{\mathbf{x}}\left(\mathcal{B}_r\left(\mathbf{0}\right)\right)
$ and $\theta = d_{\text{arc}}\left(\mathbf{x},\mathbf{y}\right)$.
\vspace{0.2cm}

{\bf Shortest Geodesic curve:} Let $\mathbf{x} \in \mathbf{S}^N$ and
$\mathbf{y} \in
\mathsf{Exp}_{\mathbf{x}}\left(\mathcal{B}_r\left(\mathbf{0}\right)\right)$. Then,
the shortest geodesic curve between $\mathbf{x}$ and $\mathbf{y}$ is a
function $\Gamma_{\mathbf{x}}^{\mathbf{y}}: \mathbf{R} \rightarrow
\mathbf{S}^N$ given by:
\begin{align*}
\Gamma_{\mathbf{x}}^{\mathbf{y}}\left(t\right) = \mathsf{Exp}_{\mathbf{x}}\left(t\mathsf{Exp}^{-1}_{\mathbf{x}}\left(\mathbf{y}\right) \right)
\end{align*}

\subsection{A parametric model for the aggregation of networks}

Let, $N_1, \cdots, N_m$ be the ``weak'' networks that we want to
combine to achieve an improved performance. Let $I \in \mathcal{I}$ be
an input image, where $\mathcal{I}$ is the given set of image
data. Let $f_1, \cdots, f_m \in \mathbf{R}^{c}$ be the output of the
networks, where $f_i$ is the output of $N_i$, i.e., $f_i = N_i(I)$,
and $c$ is the number of classes. Here $f_i$ can be viewed as the
probability vector of size $c$, containing the probabilities of an
image $I$ belonging to each of the $c$ classes. We use the square-root
parametrization to map $f_i$ on to the hypersphere
$\mathbf{S}^{c-1}$. To make the notation more concise, for network
$N_i$, we define a map $\mathcal{F}_i: \mathcal{I}\rightarrow
\mathbf{S}^{c-1}$ as
\begin{align*}
I \mapsto \sqrt{N_i(I)},
\end{align*}
where the square-root is taken element-wise.

Let $\left\{\mathcal{I}_j\right\}_{j=1}^c$ be the partition of the
data $\mathcal{I}$. We assume that for the $i^{th}$ network and for
the $j^{th}$ class, the features $\left\{
\mathcal{F}_{i}\left(I_k\right) | I_k \in \mathcal{I}_j,
k=1,...,|\mathcal{I}_j| \right\}$ are independent and identically
distributed with a Gaussian distribution $p_{ij} =
\mathcal{N}\left(\mu_{ij}, \sigma_{ij}\right)$ on $\mathbf{S}^{c-1}$
with location parameter $\mu_{ij} \in \mathbf{S}^{c-1}$ and scale
parameter, $\sigma_{ij}>0$, i.e., for each $i, j$,
\begin{align} \label{theory:eqn:gau_assumption}
\left\{\mathcal{F}_{i}(I_k) | I_k \in \mathcal{I}_j, k=1,...,|\mathcal{I}_j|\right\} \stackrel{i.i.d}{\sim} \mathcal{N}\left(\mu_{ij}, \sigma_{ij}\right)
\end{align}

On $\mathbf{S}^{c-1}$, we will use the Gaussian distribution,
$\mathcal{N}\left(\mu,\sigma\right)$, as defined in \cite{chakraborty2017statistics}. Let $X$
be an $\mathbf{S}^{c-1}$ valued random variable, then the p.d.f. is
given by:
\begin{align}
f_X(x) = \frac{1}{C(\sigma)}
\exp\left(-\frac{d^2(x,\mu)}{2\sigma^2}\right),
\end{align}
where $d$ is the geodesic distance on
$\mathbf{S}^{c-1}$. $C(\sigma)$ is the normalizing constant.
This distribution, $p_{ij}$, gives the probability of a feature
coming from the $i^{th}$ network and belonging to the $j^{th}$
class.

Let $\left\{\alpha_{i}\right\}_{i=1}^m$ be the weights associated with
the networks such that, they satisfy the affine constraint, i.e.,
\begin{align*}
(\forall i) \alpha_{i} &\geq 0 \\
\sum_{i=1}^m \alpha_{i} &=1
\end{align*}
Now, we will use these weights to define a mixture to model the
combination of these networks. For each class $j$, we define the
probability density, $p_j: \mathcal{I} \to \mathbf{R}$ by $p_j =
\sum_i \alpha_i \left( p_{ij} \circ \mathcal{F}_i \right)$. Hence, for
all $I \in \mathcal{I}$,
\begin{align*}
p_j(I) &= \sum_i \alpha_i p_{ij}(\mathcal{F}_i(I)). 
\end{align*}
Clearly, $p_j(I) \geq 0$ for all $I \in \mathcal{I}$. And because of
the affine constraint on $\left\{\alpha_i\right\}$, $p_j$ is a valid
probability density, for all $j$. Each $p_j$ will represents an
ensemble of the learned models for all the networks. Now, in the
prediction phase, we will assign the test image to the class which
maximizes this probability value.

We define the prediction by our ensemble classifier $p:\mathcal{I}
\to \triangle^{c}$ by $p(I) = \left(\frac{p_1(I)}{\sum_j
    p_j(I)},...,\frac{p_c(I)}{\sum_j p_j(I)} \right)^t$. It is easy to
see that given the image $I$, this is a probability vector since
\begin{align*}
\sum_{i=1}^c \frac{p_i(I)}{\sum_j p_j(I)} = \frac{ \sum_{i=1}^c p_i(I)}{\sum_j p_j(I)} = 1.
\end{align*}
\vspace*{0.2cm} 

{\bf Training the model:} Now we have the training data denoted by,
$\mathcal{I}^{\text{train}} \subset \mathcal{I}$, that is used to
learn the unknown parameters $\{\alpha_i, \mu_{ij},
\sigma_{ij}\}_{i,j}$, and the test data denoted by,
$\mathcal{I}^{\text{test}}\subset \mathcal{I}$. Though, it is possible
for one to learn $\left\{\mu_{ij}, \sigma_{ij}\right\}_{i,j}$,
instead, we use the Fr\'{e}chet mean (FM)\cite{frechet} on
$\left\{\mathcal{F}_i\right\}_{I \in \mathcal{I}^{\text{train}}_j}$ to
get the estimate $\hat{\mu}_{ij}$ and use the sample standard deviation
within $\left\{\mathcal{F}_i\right\}_{I \in
  \mathcal{I}^{\text{train}}_j}$ to get the estimate
$\hat{\sigma}_{ij}$, i.e.,
\begin{align}
\label{theory: parameter}
\hat{\mu}_{ij} &= \arg\min_{\mu \in \mathbf{S}^{c-1}} \frac{1}{\left\vert \mathcal{I}^{\text{train}}_j \right\vert} \sum_{I \in \mathcal{I}^{\text{train}}_j} d_{\text{arc}}^2(\mathcal{F}_i(I),\mu) \\
\hat{\sigma}_{ij} &= \sqrt{\frac{1}{\left\vert \mathcal{I}^{\text{train}}_j \right\vert} \sum_{I \in \mathcal{I}_j} d_{\text{arc}}^2(\mathcal{F}_i(I),\hat{\mu}_{ij})} \\
\hat{C}(\hat{\sigma}_{ij}) &= \left[ \sum_{I \in \mathcal{I}^{\text{train}}} \exp\left(-\frac{d_{\text{arc}}^2(\mathcal{F}_i(I),\hat{\mu}_{ij})}{2\hat{\sigma}_{ij}^2}\right)\right]^{-1}
\end{align} 

In this work, rather than optimizing the minimization problem to get
the FM, we will use an incremental FM estimator on $\mathbf{S}^N$
presented in \cite{salehian2015efficient}. For completeness, we will
give the formulation of the FM estimator here. Given
$\left\{\mathbf{x}_i\right\}_{i=1}^n$ on $\mathbf{S}^N$, the FM of
these samples can be estimated by $\mathbf{m}_n$, where $\mathbf{m}_n$
is defined recursively as follows:
\begin{align*}
\mathbf{m}_1 &= \mathbf{x}_1 \\
\mathbf{m}_{k+1} &= \Gamma_{\mathbf{m}_k}^{\mathbf{x}_{k+1}}\left(\frac{1}{k+1}\right)
\end{align*}
In \cite{salehian2015efficient}, the authors provide a proof of weak
consistency of this estimator.

Note that in our case, all entries of $\mathcal{F}_i(I)=\sqrt{N_i(I)}$
are positive, so they lie in the positive quadrant of the
hypersphere. Hence the existence and uniqueness of the FM are
guaranteed \cite{afsari2011riemannian}. Given $\left\{\hat{\mu}_{ij},
\hat{\sigma}_{ij}\right\}_{i,j}$, we will learn $\alpha_{i}$ by
minimizing the following objective function,
\begin{align}
\label{theory:loss}
L\left(\left\{\alpha_i\right\}\right) = \frac{1}{|\mathcal{I}^{\text{train}}|}\sum_{k=1}^{|\mathcal{I}^{\text{train}}|} d^2(y_k, p(I_k)).
\end{align}
\vspace*{0.2cm}

{\bf Training of $\left\{\alpha_{i}\right\}$}: $\alpha_{i}$ is the
weight on network $N_i$. Since $\sum_i \alpha_{i} = 1$ and
$\alpha_{i} \geq 0$, we will identify $\left\{\alpha_{i}\right\}$ on
the hypersphere of dimension $m-1$, i.e., on $\mathbf{S}^{m-1}$ and
then do Riemannian gradient descent on the hypersphere. The algorithm
to solve for $\alpha_{i}$ by minimizing $L$ is given in
Algo. \ref{theory:alg1}.

\begin{algorithm}
\KwIn{$\alpha_{i}=1/m$, for all $i$, $\left\{\hat{\mu}_{ij}\right\}$, $\left\{\hat{\sigma}_{ij}\right\}$, $\eta >0$} 
\KwOut{$\left\{\hat{\alpha}_{i}\right\}$}

   $\tilde{\alpha}_{i} = \sqrt{\alpha_{i}}$ and then $\left(\tilde{\alpha}_{i}\right)$ lies on $\mathbf{S}^{m-1}$\;   
    \While{\text{convergence is not achieved}}{ 
   Compute $\nabla_{\left(\tilde{\alpha}_{i}\right)}E \in T_{\left(\tilde{\alpha}_{i}\right)}\mathbf{S}^{m-1}$ \;
   Set $\left(\tilde{\alpha}_{i}\right) = \textsf{Exp}_{\left(\tilde{\alpha}_{i}\right)}\left(-\eta\nabla_{\left(\tilde{\alpha}_{i}\right)}E\right)$ \;
   }
   
   $\hat{\alpha}_{i} = \tilde{\alpha}_{i}^2$, for all $i$ \;
       
    \caption{\footnotesize{Learning of $\left\{\alpha_{i}\right\}$s in order to minimize Eq. \ref{theory:loss}.}}
    \label{theory:alg1}
    
\end{algorithm}
In the above algorithm $\textsf{Exp}$ is Riemannian Exponential map on
hypersphere. This above algorithm ensures that
$\left\{\hat{\alpha}_{i}\right\}$ satisfy the affine constraints.

Since labeled images $(I, y)$ are given, without loss of generality,
we can assume that the label $y$ is of the form
$y=\mathbf{1}_j\in\mathbf{R}^c$ where $I$ is from $j$th class and then
we can view $y$ as a degenerated distribution. To be consistent, we
identify these two distributions, $y$ and $\hat{p}(I)$, with points on
the hypersphere $\mathbf{S}^{c-1}$ and use the arc-length distance as
the distance between $y$ and $\hat{p}(I)$, i.e.
\begin{align*}
d(y,p(I)) = d_{\text{arc}}\left(\frac{y}{\left\Vert y \right\Vert}, \frac{p(I)}{\left\Vert \hat{p}(I) \right\Vert}\right) = \cos^{-1}\left( \frac{y^Tp(I)}{\left\Vert y \right\Vert \left\Vert p(I) \right\Vert}\right)
\end{align*}
\vspace*{0.2cm}

{\bf Prediction of the class for a new sample $I$}: Given
$\left\{\hat{\alpha}_{i}\right\}$, $\left\{\hat{\mu}_{ij}\right\}$,
$\left\{\hat{\sigma}_{ij}\right\}$, the predicted class probability is
given by,
\begin{align*}
\hat{p}_j(I) &= \sum_i \hat{\alpha}_i p_{ij}(\mathcal{F}_i(I)) \\
			  &= \sum_i \hat{\alpha}_i \frac{1}{\hat{C}(\hat{\sigma}_{ij})}\exp\left(-\frac{d_{\text{arc}}^2(\mathcal{F}_i(I),\hat{\mu}_{ij})}{2\hat{\sigma}_{ij}^2}\right) .
\end{align*}
When a test image $I \in \mathcal{I}^{\text{test}}$ is given, we will
assign it to a class $j^*$ for which the prediction probability is
maximized, i.e.,
\begin{align*}
j^* = \arg\max_j \hat{p}_j(I)
\end{align*}

Now, that we have a model and an algorithm to learn the model, we will
present a framework that can combine features extracted from different
algorithms (deep networks or hand-crafted) and hence can have different
number of feature dimension.
\vspace*{0.2cm} 

{\bf $\left\{f_i\right\}$ as the output from the fully connected layer
  (or as hand crafted features):} Note that, $f_i$ is the output of
the network $N_i$ from an intermediate fully connected layer (or $f_i$
be the dimension of hand crafted features). Let, $f_i \in
\mathbf{R}^{d_i}$, for all $i=1, \cdots, m$. We want the features to
be affine invariant, but as none of the networks output affine
invariant features, we quotient out the group of affine
transformations from the features to map each feature on to the
Grassmannian. We want the affine invariance in the extracted features,
so that if two networks (or algorithms to compute hand crafted
features) output features which are related by an affine
transformation, we will not consider these two networks to be
different.

We will use $\mathcal{F}_i$ to denote the point on the Grassmannian
corresponding to $f_i$, i.e., $\mathcal{F}_i \in
\text{Gr}(1,d_i)$. Observe that each $\mathcal{F}_i$ may lie on the
Grassmannian of different dimensions (as $d_i$ may be different for
different networks). Let, $p_{ij}$ be the Gaussian distribution which has been
fitted to $\left\{\mathcal{F}_i\right\}_{I \in \mathcal{I}_j}$
corresponding to $N_i$, i.e., $p_{ij} = \mathcal{N}\left(\mu_{ij},
\sigma_{ij}\right)$, where, $\mu_{ij} \in \text{Gr}(1, d_i)$,
$\sigma_{ij}>0$.

On $\text{Gr}(1,d_i)$, we will use the Gaussian distribution,
$\mathcal{N}\left(\mu_{ij}, \sigma_{ij}\right)$, as defined in
\cite{chakraborty2017statistics}. Let $\mathfrak{x}$ be a $\text{Gr}(1,d_i)$
valued random variable, then the p.d.f. is written as:
\begin{align}
f_X(\mathfrak{x}) = \frac{1}{C(\sigma_{ij})} \exp\left(-\frac{d^2(\mathfrak{x},\mu_{ij})}{2\sigma_{ij}^2}\right),
\end{align}
where, $d$ is the canonical geodesic distance on
$\text{Gr}(1,d_i)$. $C(\sigma_{ij})$ is the normalizing constant. The
canonical distance $d$ on $\text{Gr}(1,d_i)$ is defined as
follows. Let $\mathfrak{x}, \mathfrak{y} \in \text{Gr}(1,d_i)$ with
the respective orthonormal basis $x$ and $y$. Then, the geodesic
distance is defined by:
\begin{align*}
d((\mathfrak{x}, \mathfrak{y}) = \|\arccos\left(\text{diag}\left(\Sigma\right)\right)\|,
\end{align*}
where $U\Sigma V^T = x^Ty$ is the singular value decomposition.

Note that, though, $p_i$ is defined on
$\left\{\mathcal{F}_i\right\}_{I \in \mathcal{I}} \subset
\text{Gr}(1,d_i)$, we will use the support of $p_i$ as $\mathcal{I}$,
i.e.,
\begin{align*}
\label{theory:eq0}
\int_{\mathcal{I}} p_{i} :&= \int_{\mathcal{I}} \frac{1}{k}\sum_j p_{ij} \\
						   :&= \frac{1}{k}\sum_j\int_{\left\{\left.N_i(I)\right\vert I \in  \mathcal{I}_j\right\}} p_{ij} \\
						   :&= \frac{1}{k}\sum_j\int_{\left\{\mathcal{F}_i\right\}_{I \in \mathcal{I}_j}} p_{ij} \\						   &= 1 \numberthis
\end{align*}
The support of $p_{ij}$ over $\mathcal{I}$ is needed to define a
mixture of $\left\{p_{ij}\right\}$ for each $j$.

We define the mixture of $\left\{p_{ij}\right\}$ as $p_j := \sum_i
\alpha_{i} p_{ij}$ for each $j^{th}$ class.

\begin{theorem}
For all $j$, $p_j = \sum_i \alpha_{i} p_{ij}$ is a probability density on $\mathcal{I}_j$.
\end{theorem}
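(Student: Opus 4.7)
The plan is to verify the two defining properties of a probability density for $p_j$: non-negativity pointwise, and unit total mass over $\mathcal{I}_j$. Both should follow directly from the affine constraint on $\{\alpha_i\}$, the fact that each $p_{ij}$ is itself a valid density (a Gaussian on the relevant Grassmannian with its support transported to $\mathcal{I}_j$ as in Eq.~(\ref{theory:eq0})), and linearity of integration.

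First I would establish non-negativity. Since each $p_{ij}$ is a Gaussian density on $\text{Gr}(1,d_i)$ (via $\mathcal{F}_i$) with positive normalizing constant $C(\sigma_{ij})$, we have $p_{ij}\geq 0$ on its support. Combined with $\alpha_i \geq 0$, this yields $p_j(I) = \sum_i \alpha_i p_{ij}(\mathcal{F}_i(I)) \geq 0$ for every $I \in \mathcal{I}_j$.

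Next I would verify the unit mass condition. From the chain of equalities in Eq.~(\ref{theory:eq0}), the extended support convention gives $\int_{\mathcal{I}_j} p_{ij} = 1$ for each pair $(i,j)$: indeed, the display shows $\int_{\mathcal{I}} \tfrac{1}{k}\sum_j p_{ij} = 1$, and since the class supports partition $\mathcal{I}$ into the pieces $\{\mathcal{F}_i\}_{I\in\mathcal{I}_j}$ on which each $p_{ij}$ is concentrated, equating term by term forces $\int_{\mathcal{I}_j} p_{ij} = 1$. Then by linearity of the integral and the affine constraint $\sum_i \alpha_i = 1$,
\begin{align*}
\int_{\mathcal{I}_j} p_j \;=\; \int_{\mathcal{I}_j} \sum_i \alpha_i p_{ij} \;=\; \sum_i \alpha_i \int_{\mathcal{I}_j} p_{ij} \;=\; \sum_i \alpha_i \;=\; 1.
\end{align*}

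There is no real obstacle here; the only subtle point is making sure the support convention of Eq.~(\ref{theory:eq0}) is applied correctly, since $p_{ij}$ is originally defined on $\text{Gr}(1,d_i)$ but must be interpreted as a density on $\mathcal{I}_j$ via pushforward through $\mathcal{F}_i$. Once this identification is stated cleanly, the two checks above complete the argument.
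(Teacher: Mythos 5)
Your proposal is correct and follows essentially the same route as the paper: non-negativity from $\alpha_i \geq 0$ and $p_{ij} \geq 0$, and unit mass by linearity of the integral, the support convention of Eq.~(5) giving $\int_{\mathcal{I}_j} p_{ij} = 1$ for each $i$, and the affine constraint $\sum_i \alpha_i = 1$. Your extra care in extracting $\int_{\mathcal{I}_j} p_{ij} = 1$ from the displayed chain is a slightly more explicit justification of a step the paper simply asserts, but the argument is the same.
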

\begin{proof}
For each $j$, 
\begin{align*}
\int_{\mathcal{I}_j} \sum_i \alpha_{i} p_{ij} :&= \sum_i \alpha_{i} \int_{\left\{\left.N_i(I)\right\vert I \in  \mathcal{I}_j\right\}} p_{ij} \\
&= \sum_i \alpha_{i} = 1
\end{align*}
As, $p_{ij} \geq 0$ and $\alpha_{i} \geq 0$, for all $i$, $\sum_i
\alpha_{i} p_{ij} \geq 0$. This completes the proof.
\end{proof}

The above definition of mixture has components defined on different
dimensional spaces, but because of the definition in
Eq. \ref{theory:eq0}, the mixture $p_j = \sum_i \alpha_{ij} p_{ij}$ is
a valid probability density on $\mathcal{I}$ for each $j$. This is a
more general framework as it allows us to combine output of intermediate
layers of deep networks. As future work, we will explore utilizing
this more general framework to combine outputs from intermediate
network layers. As in our experiments, we have found that the choice
of layer for $\left\{f_i\right\}$ is crucial, a detailed study in this
more general direction should be needed and is beyond the scope of
this paper. However, in this work we showed the performance gain of
our proposed framework on hand crafted features such as Histogram of
Oriented gradients (HOG) \cite{dalal2005histograms}, SIFT
\cite{lowe1999object} etc.

%

\subsection{Non-parametric model}

In the previous subsection, we have assumed a Gaussian distribution on
$\left\{ \mathcal{F}_i(I), I \in \mathcal{I}^{\text{train}}_j\right\}$
for the $i^{th}$ network and $j^{th}$ class. Though this parametric
assumption is simple, it is not very realistic since, the features of
those being classified correctly and those being misclassified are not
from a single Gaussian distribution but maybe a multi-modal
distribution. Hence, in this section, we will estimate
$\left\{p_{ij}\right\}$ using kernel density estimation.  We will
assume Gaussian kernel and write $p_{ij}$ as follows. Let
$\mathcal{F}_{ij} := \left\{\mathcal{F}_i(I)\right\}_{I \in
  \mathcal{I}^{\text{train}}_j}$ be the set of outputs of $N_i$ on
$\mathcal{I}^{\text{train}}_j$.
\begin{align*}
\label{theory:eq3}
p_{ij}(x) = \frac{1}{C(b)\left\vert \mathcal{F}_{ij} \right\vert}\sum_{y \in \mathcal{F}_{ij}}\exp\left(-\frac{d_\text{arc}^2(x,y)}{2b^2}\right)
\end{align*}
for $x \in \left\{ \mathcal{F}_i(I), I \in \mathcal{I}\right\}$.
Here, $b$ is the bandwidth of the kernel which we have selected based
on Silverman's rule of thumb, i.e., $b =
\left(\frac{4\hat{\sigma}_{ij}^5}{3\left\vert \mathcal{F}_{ij}
  \right\vert}\right)^{1/5}$, where, $\hat{\sigma}_{ij}$ is the sample
standard deviation from Eq. \ref{theory: parameter} and
\begin{align*}
\hat{C}(b) = \left[ \sum_{I \in \mathcal{I}^{\text{train}}} \frac{1}{\left\vert \mathcal{F}_{ij} \right\vert}\sum_{y \in \mathcal{F}_{ij}}\exp\left(-\frac{d_\text{arc}^2(\mathcal{F}_i(I),y)}{2b^2}\right)\right]^{-1}.
\end{align*}

The rest of the algorithm is same as in the previous subsection. We
define the mixture of networks model $p_j = \sum_i \alpha_{i} \left(
p_{ij} \circ \mathcal{F}_i \right)$ and then solve for
$\left\{\alpha_{i}\right\}$ in order to minimize the objective
function in Eq. \ref{theory:loss}.

The entire procedure of our ensemble method is shown in Figure \ref{fig:flowchart}.

\begin{figure*}[!ht]
        \centering
                \includegraphics[width=0.9\textwidth]{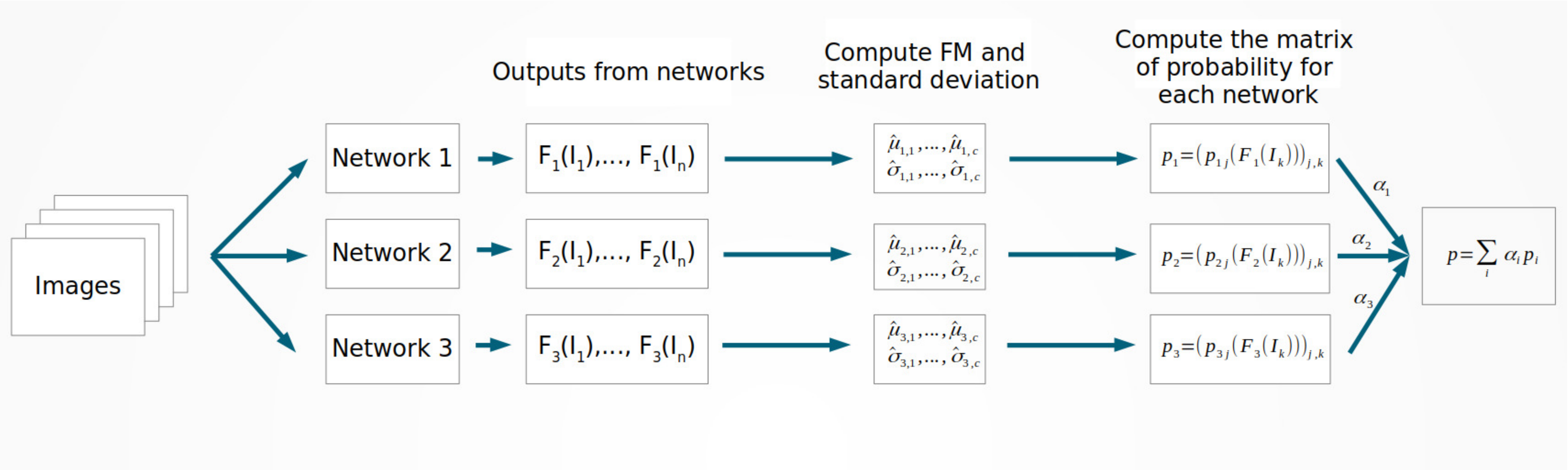}
               \caption{Illustration of our ensemble method.}\label{fig:flowchart}
\vspace*{-0.4em}
\end{figure*} 

\section{Experiments}\label{results}

In this section, we present experiments for both the parametric and
the non-parametric model on four publicly available datasets: CIFAR-10,
CIFAR-100, MNIST, EMNIST-letters (with English alphabet only) \cite{cohen2017emnist},
EMNIST.  A brief description for each of the datasets is given below.
\begin{itemize}
\item The CIFAR-10 dataset consists of 60,000 $32\times 32$ color
  images from 10 classes, of which 50,000 are used for training
  and the rest are used for testing.
\item The CIFAR-100 dataset consists of 60,000 $32\times 32$ color
  images from 100 classes, of which 50,000 are used for training
  and the rest are used as test data.
\item The MNIST dataset consists of 70,000 $28\times 28$ grey images
  of handwritten digits 0~9, of which 60,000 are used for
  training and the rest are used as test data.
\item The EMNIST-letters dataset consists of 145,600 $28\times 28$
  grey images of handwritten English alphabets (26 classes), of which
  124,800 are used for training and the rest for testing.

\item The EMNIST-balanced dataset consists of 131,600 $28\times 28$
  grey images of handwritten alphabets and digits in 47 classes
  (merging those alphabets with similar uppercase and lowercase,
  e.g. C, O), of which 112,800 are used for training and
  the rest for testing.
\end{itemize}
An outline of the entire procedure used in the experiments is presented
below:
\begin{enumerate}
\item Train 20 CNNs $N_{1},...,N_{20}$ for each dataset. The choice of
  CNN can be arbitrary and in order to show the power of our proposed
  ensemble technique, we trained the networks for only a few epochs to
  yield ``weak'' networks.  Here, for the sake of convenience, we
  choose the following architectures (all the models we used in this
  experiment are based on the models provided by keras
  \cite{chollet2015keras} and modified slightly to meet our needs):
\begin{enumerate}
\item \textbf{CIFAR-10} We chose ResNet\cite{he2016deep} with 20
  weight layers and train these networks for only 3 epochs. The
  classification accuracies of these networks range from 61.6\% to
  72.8\% and the average accuracy is 67.02\%.
\item \textbf{CIFAR-100} We chose ResNet with 56 weight layers and
  train these networks for 50 epochs. The classification accuracies of
  these networks range from 59.1\% to 63.5\% and the average accuracy
  is 61.71\%.
\item \textbf{MNIST} We chose a very simple CNN with only one
  convolution layer and one fully-connected layer and train these
  networks for only 1 epoch. The classification accuracies of these
  networks range from 89.8\% to 93.2\% and the average accuracy is
  90.89\%.
\item \textbf{EMNIST-letters} We chose a CNN with 2 convolution layer
  and 2 fully-connected layer and train these networks for only 1
  epoch. The classification accuracies of these networks range from
  89.8\% to 93.2\% and the average accuracy is 90.24\%.
\item \textbf{EMNIST-balanced} We chose a CNN with 2 convolution
  layer and 2 fully-connected layer and train these networks for only
  1 epoch. The classification accuracies of these networks range from
  82.1\% to 83.7\% and the average accuracy is 82.94\%.
\end{enumerate}
\item Compute the estimated weights $\alpha_{i}$, $i=1,...,20$ using
  Algorithm \ref{theory:alg1}.
\item Combine these networks and compute the classification accuracy
  on the test data.
\end{enumerate}
The results are shown in Table \ref{tbl:acc}.

\begin{table}[t]
\begin{center}
\begin{tabular}{c|c|c|c} 
 & Ave. Acc. & Param. & Non-param.\tabularnewline
\hline  
CIFAR-10 & 67.02\% & 75.99\% & 79.5\%\tabularnewline
\hline 
CIFAR-100 & 61.71\% & 65.71\% & 73.14\%\tabularnewline
\hline 
MNIST & 90.89\% & 93.55\% & 93.58\%\tabularnewline
\hline 
EMNIST-letters & 90.24\% & 91.52\% & 91.61\%\tabularnewline 
\hline
EMNIST-balanced & 82.94\% & 84.27\% & 85.66 \%  
\end{tabular}
\end{center}
\caption{Accuracies of the four datasets for parametric and
  non-parametric model}
\label{tbl:acc}
\vspace{-1em}
\end{table}

The result shows clearly that the proposed method works quite well and
as we expected, when the networks are strong there is not much leeway
to improve. On the contrary, when the networks are weak, the
improvement is very significant. We can also see that the difference
between parametric and non-parametric models decreases as the networks
get stronger. Since obviously the features from those being classified
correctly and those being classified incorrectly are not from the same
distribution, in such cases, using a single Gaussian is not
appropriate. When the networks are stronger, the difference between a
single Gaussian distribution and the kernel density estimate is
smaller. The motivation to use the parametric model when it performs
almost as good as non-parametric model is clear: the non-parametric
model takes 2 to 5 times longer than the parametric model.

In practice, we would like to know whether this ensemble technique reduces the time needed to achieve a certain accuracy. To answer this question, we run a experiment based on CIFAR-10 and the parametric ensemble model. The experiment goes as follow:
\begin{enumerate}
\item We trained 5 networks on CIFAR-10 using the same architecture as in the previous experiment.
\item Ensemble the intermediate models after running different number of epochs.  
\end{enumerate}
The result is shown in Figure \ref{fig:acc}. As we can see, the ensemble network performed constantly better. Since our ensemble method requires multiple networks, when comparing the efficiency of our method and the traditional CNN, it is better to consider the effective number of epochs, e.g., if we combine 5 networks and each of them is trained for 10 epochs, then the effective number of epochs would be $5 \times 10 = 50$. Table \ref{tbl:acc2} shows the result of this experiment in terms of the effective number of epochs. The table is to be interpreted as follows: on CIFAR-10, training a network with 50 epochs gives a classification accuracy 76.66\% while training 5 networks, each with 10 epochs, and building the ensemble classifier based on these five networks gives a classification accuracy 80.06\%. The message is that if you train multiple networks and build the ensemble network, you will get a better performance. 

Another advantage of our ensemble method is that we can run
multiple networks on different machines in parallel and then combine
them without any retraining. The extra optimization step for finding the weights
$\left\{ \alpha_i \right\}$ takes less than a few minutes in all our
experiments.

\begin{figure}
  \centering
	\includegraphics[width=3in]{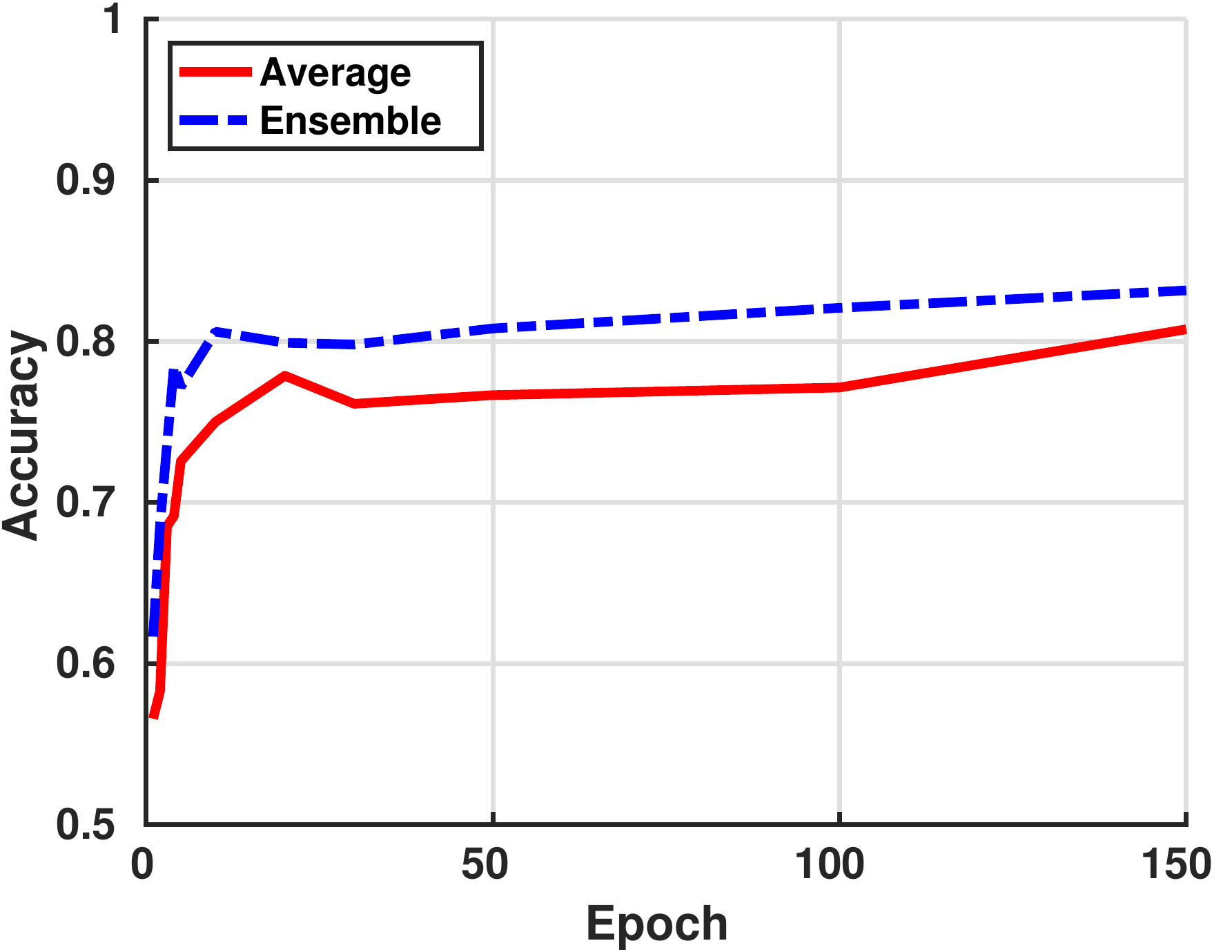} 
  \caption{The comparison between the average accuracy of 5 networks and the accuracy of the ensemble networks: the dashed line is the ensemble network and the solid line is the average accuracy of the 5 networks.}
  \label{fig:acc}
\end{figure}

\begin{table}[t]
\begin{center}
\begin{tabular}{c|c|c|c} 
 Epochs & 20($5 \times 4$) & 50($5 \times 10$) & 100($5 \times 20$) \\
\hline  
Ave. & 77.86\% & 76.66\% & 77.13\% \\
\hline 
Ensemble & 78.46\% & 80.6\% & 79.91\% \\ 
\end{tabular}
\end{center}
\caption{Average accuracy of networks and the accuracy of our parametric ensemble network at different effective number of epochs. The ensemble configuration is indicated in the parentheses: number of networks $\times$ number of epochs per network.}
\label{tbl:acc2}
\vspace{-1em}
\end{table}

The third experiment is based ensemble classifiers using the intermediate features instead of the final outputs. The experiment is performed on MNIST, using weak classifiers based on two HOG features \cite{dalal2005histograms} (with two different configuration) and the Daisy feature \cite{tola2010daisy}. Each weak classifier is built using the mixture model described in Section \ref{theory}, i.e., the special case when there is only one network. The average accuracy of these three weak classifiers is 85.16\% and the accuracy of the ensemble classifier is 88.6\%. The result again shows capability of our ensemble method to boost the performance without re-training.

\section{Conclusions}
\label{conc}

In this paper we presented a novel aggregation technique to combine
``weak'' networks/algorithms in order to boost the classification
accuracy over each constituent of the aggregate. Traditional boosting
requires re-training of every constituent of the aggregate and in
contrast, our aggregation model does not require any re-training. This
makes our aggregation model quite attractive from a computational cost
perspective. We presented both parametric and non-parametric
aggregation techniques and demonstrated via experiments the efficiency of the proposed methods. Another key advantage
of our technique stems from the fact that it can cope with aggregation
of features of distinct dimensions that are likely to result from
using either different networks or even hand-crafted features that are
extracted from the data. These salient features make our aggregation
model unique. We presented several experiments demonstrating the
performance of our proposed aggregation technique on widely used
image databases in computer vision literature.
\\

{\bf Acknowledgements:} This research was funded in part by the NSF grant IIS-1525431 and IIS-1724174 to BCV.

{\small
\bibliographystyle{plain}
\bibliography{references}
}

\end{document}